\documentclass[lettersize,journal]{IEEEtran}

\usepackage[caption=false,font=normalsize,labelfont=sf,textfont=sf]{subfig}
\usepackage{cite}
\usepackage{multicol}
\usepackage[bookmarks=true]{hyperref}

\usepackage{bm}
\usepackage{amssymb}
\usepackage{dsfont}
\usepackage{amsmath,amsfonts}
\usepackage{titlesec}
\usepackage{indentfirst}
\usepackage{algorithm}
\usepackage{algorithmicx}
\usepackage{algpseudocode}

\usepackage{amsthm}
\usepackage{graphicx}
\usepackage{url}
\usepackage{svg}
\usepackage{nicefrac}
\usepackage{balance}
\usepackage{mathtools}
\usepackage{booktabs}
\usepackage{siunitx}
\usepackage{multirow}
\usepackage{pgfplots}
\usepackage{pgfplotstable}
\usepackage{tikz}
\usepackage{placeins}
\pgfplotsset{compat=newest}
\usepgfplotslibrary{groupplots}
\usepgfplotslibrary{units}
\usepgfplotslibrary{statistics}
\usepackage{subcaption}
\usepackage[percent]{overpic}
\definecolor{customred}{RGB}{220,33,77}
\definecolor{lightred}{RGB}{255,150,150}
\definecolor{customblue}{RGB}{0,100,222}
\definecolor{lightblue}{RGB}{150,200,255}

\usepackage{cleveref}
\Crefname{figure}{Fig.}{Figs.}

\newtheoremstyle{exampstyle}
  {3pt} 
  {3pt} 
  {\itshape} 
  {} 
  {\bfseries} 
  {.} 
  {.5em} 
  {} 

\theoremstyle{exampstyle} 
\newtheorem{definition}{Definition}
\newtheorem{corollary}{Corollary}

\newtheorem{theorem}{Theorem}

\newtheorem{assumption}{Assumption}

\newtheorem{example*}{Example*}

\theoremstyle{plain}

\newcommand{\RNum}[1]{\uppercase\expandafter{\romannumeral #1\relax}}

\usepackage[table,RGB,dvipsnames]{xcolor}
\usepackage[normalem]{ulem} 

\algtext*{EndFor}
\algtext*{EndIf}

\newif\ifrevised
\revisedfalse  
\newcommand{\rev}[1]{\ifrevised{\color{blue}#1}\else#1\fi}
\newcommand{\del}[1]{\ifrevised{\textcolor{red}{\sout{#1}}}\else{}\fi}

\begin{document}

\title{Safe and Scalable Multi-Drone Payload Transport via CBF-based Reinforcement Learning with Zero-Shot Sim-to-Real Transfer}

\author{
Jaeyoun Choi\textsuperscript{1},
Oswin So\textsuperscript{1},
Songyuan Zhang\textsuperscript{1},
Cooper Taylor\textsuperscript{2},
and Chuchu Fan\textsuperscript{1}
%
%
%
\thanks{\textsuperscript{1}Jaeyoun Choi, Oswin So, Songyuan Zhang, and Chuchu Fan are with the Reliable Autonomous Systems Lab, Massachusetts Institute of Technology, Cambridge, MA, USA.
Emails: {\tt\small \{cjy0051, oswinso, szhang21, chuchu\}@mit.edu}.}
\thanks{\textsuperscript{2}Cooper Taylor contributed to this work while participating in the Research Science Institute program at the Reliable Autonomous Systems Lab, Massachusetts Institute of Technology, Cambridge, MA, USA.
Email: {\tt\small cooperetaylor@gmail.com}.}
\thanks{The authors were partly funded by the Ministry of Trade, Industry and Energy (MOTIE), Korea, through the Global Industrial Technology Cooperation Center (GITCC) Program, supervised by the Korea Institute for Advancement of Technology (KIAT), Task No. P24680172.}
%
}
\markboth{IEEE Robotics and Automation Letters. Preprint Version. Accepted July, 2026}
{Choi \MakeLowercase{\textit{et al.}}: Safe and Scalable Multi-Drone Payload Transport via CBF-based Reinforcement Learning with Zero-Shot Sim-to-Real Transfer}
\maketitle
\IEEEaftertitletext{\IEEEpubid}

\begin{abstract}
Multi-drone payload transportation has emerged as a promising research paradigm with potential applications in construction, logistics, and disaster response. However, the complex coupled dynamics among drones, cables, and payloads pose significant challenges, and existing approaches remain limited in safety and scalability, particularly in dynamic and unstructured environments. In this work, we propose a learning-based framework for safe and scalable multi-drone cooperative payload transport. We introduce a minimal 2D abstraction that preserves the task-relevant drone–payload coupling required for coordination and safety, while remaining computationally efficient for large-scale learning. Using domain randomization over team size and physical parameters, we train a fully distributed policy via Discrete Graph Control Barrier Function Proximal Policy Optimization (DGPPO), enabling robust zero-shot sim-to-real transfer without fine-tuning. Extensive real-world evaluations demonstrate that a single learned policy generalizes across varying team sizes and task scenarios. Furthermore, multi-group hardware experiments show that the same policy can safely operate in dynamic environments, where other drone teams act as moving obstacles. These results indicate that the proposed framework enables efficient, safe, and scalable multi-drone payload transportation with strong generalization to complex real-world conditions.
\end{abstract}

\begin{IEEEkeywords}
Multi-Robot Systems, Robot Safety, Aerial Systems: Applications
\end{IEEEkeywords}



\section{Introduction}
\IEEEPARstart{C}{ooperative} payload transportation is a central problem in multi-robot systems, where multiple robots physically coordinate to move a shared payload~\cite{kube2000cooperative,tuci2018cooperative,an2023multi,arbel2025mechanical,alonso2017multi,de2022two}. One prominent example is cable-suspended multi-drone systems, which offer structural simplicity, modularity, and adaptability~\cite{barakou2024review}. These systems are attractive for practical applications, including last-mile delivery, disaster relief, automated construction, and logistics.

Scalability is essential because larger teams can improve payload capacity, load distribution, and deployment flexibility~\cite{barakou2024review,jackson2020scalable,li2023nonlinear}. However, scaling beyond a few drones remains challenging in real-world settings due to coupled cable-payload interactions, inter-agent interactions, sensing noise, and hardware heterogeneity~\cite{barakou2024review, jackson2020scalable,li2023nonlinear}. Consequently, most real-world demonstrations remain limited to two or three drones.

Safety further compounds this challenge. Prior work has demonstrated cooperative aerial transport using centralized optimization, distributed trajectory generation, or decentralized sensing~\cite{alonso2017multi,lee2016planning,jackson2020scalable,wehbeh2020distributed,gassner2017dynamic}. However, real-world validation has largely remained limited to small teams and structured environments, while safety is often enforced through \textit{centralized} planners that rely on explicit environment representations~\cite{barakou2024review,li2023nonlinear,sarvaiya2024hpa,li2024human2}. Achieving scalable payload transport with reactive environment-level safety under fully \textit{distributed} execution, therefore, remains largely open.
\begin{figure*}[!t]
    \centering
    \includegraphics[width=\textwidth]{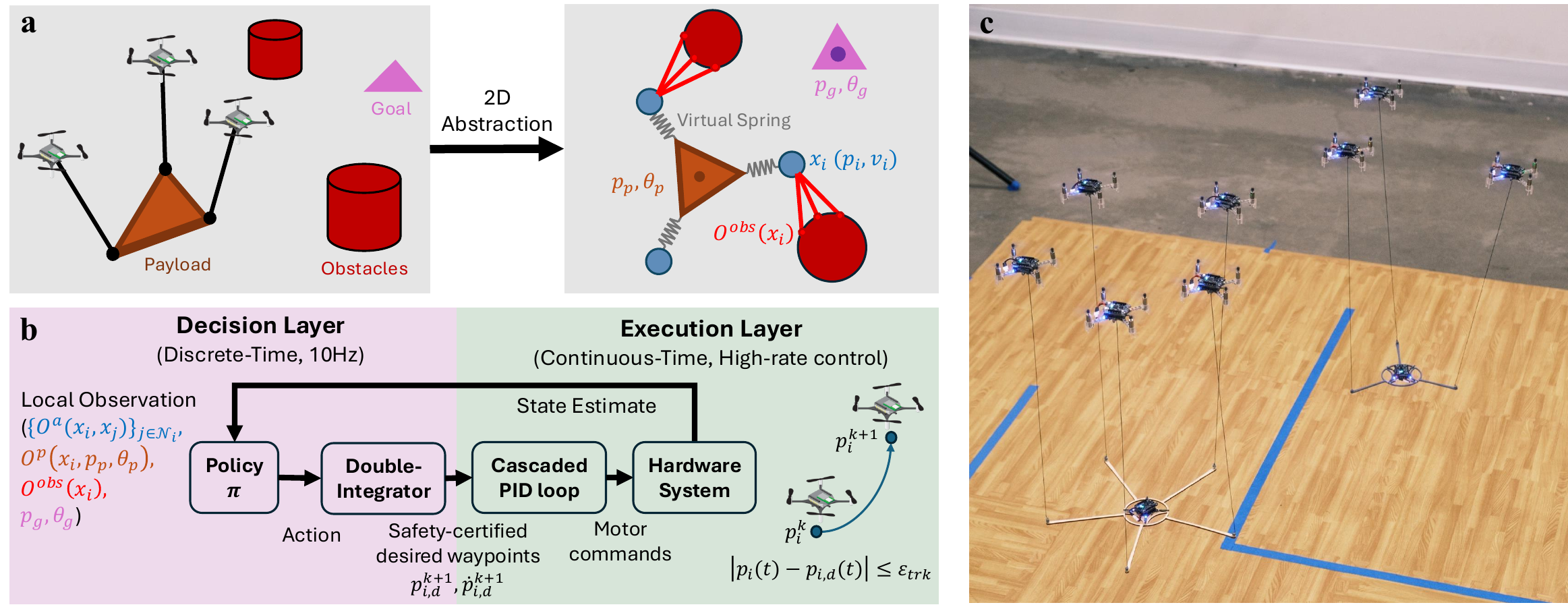}
    \vspace{-0.4cm}
    \caption{\textbf{Distributed learning framework for safe and scalable multi-drone transportation.}
    (\textbf{a}) Real-world cable-suspended transport and corresponding
    2D abstraction. Drones are modeled as point masses connected to the payload via virtual springs, and obstacles are detected with LiDAR.
    (\textbf{b}) Control architecture and discrete-to-continuous safety bridge. A distributed policy $\pi$ generates DGCBF-certified waypoints from local observations, tracked by an onboard PID controller. With bounded tracking error $\varepsilon_{\mathrm{trk}}$, tightened DGCBF constraints ensure continuous-time safety.
    (\textbf{c}) Zero-shot real-world deployment. The learned policy transfers directly to physical quadrotors without fine-tuning, demonstrating robust coordination in dynamic environments.
    }
    \vspace{-0.4cm}
    \label{fig:Figure1}
\end{figure*}

To address these challenges, we present a learning-based framework for scalable,
safe cable-suspended multi-drone transportation. To our knowledge, no prior work has demonstrated real-world cable-suspended multi-drone transportation with a safe, \textit{fully distributed} policy in dynamic environments with moving obstacles. Our main \textbf{contributions} are:

\begin{itemize}
  \item \textbf{Efficient simulation and domain randomization for scalable and safe policy learning:}
We develop an efficient simulation environment based on a minimal 2D transport abstraction.
We learn a shared-parameter distributed policy by using domain randomization that generalizes across team sizes and task instances while enforcing safety constraints.

\item \textbf{Bridging learned discrete-time safety to continuous-time robotic execution:}
We provide a safety analysis that connects discrete-time safety guarantees obtained from learned control barrier functions to continuous-time execution under hierarchical tracking control, enabling their practical deployment on real robotic systems.

 \item \textbf{Scalable and safe real-world multi-drone transportation:} We demonstrate reliable payload transport on Crazyflie teams of up to six drones, exceeding the training range, including multi-group hardware experiments in which other drone teams act as dynamic obstacles.
\end{itemize}

\section{Related Work}

\subsection{Control of Multi-Drone Payload Transportation}

A wide range of control and planning methods has been applied to cable-suspended multi-drone transportation systems. Centralized control frameworks have demonstrated precise object transport in simulation or experimental settings~\cite{lee2016planning,li2023nonlinear,wehbeh2020distributed,nguyen2018novel,michael2011cooperative,bernard2010load,sanalitro2020full,sun2023nonlinear,pallar2025optimal,sarvaiya2024hpa,sun2025agile}. While effective for small teams, these methods often rely on global state information and centralized computation, which limits their scalability in real-world hardware demonstrations.

In parallel, decentralized~\cite{sundin2022decentralized,oishi2022cooperative,mohammadi2018decentralized} and distributed~\cite{wehbeh2020distributed,li2021cooperative,gassner2017dynamic,tagliabue2017collaborative,lee2017geometric,wu2014geometric, jackson2020scalable, zhang2025defmarl} control strategies have been explored.
These approaches leverage local sensing and limited communication, and several have demonstrated real-world feasibility with two or three drones. However, most existing methods either rely on leader-based architectures with a single point of failure or are validated mainly in simulation.

Learning-based approaches are another attractive control approach due to their ability to handle nonlinear dynamics and partial observability. However, most prior works focus on simulation-only results, and successful sim-to-real transfer remains limited~\cite{faust2017automated,li2022deep,estevez2024reinforcement,mo2024dral,khursheed2024cooperative,li2021multi}. Zeng et al.~\cite{zeng2025decentralized} recently demonstrated hardware experiments based on decentralized reinforcement learning, but their approach assumes collision-free environments and does not explicitly address safety.

\subsection{Safety in Multi-Robot Transportation Systems}

Safety is critical for real-world multi-drone transportation, especially in cluttered and dynamic environments. Recent safety-aware approaches enforce safety constraints through centralized, model-based optimization~\cite{sun2025agile,jackson2020scalable} or exploit internal system redundancy for human-aware safety under hierarchical control~\cite{li2024human2}. However, these methods typically rely on global state information and centralized computation, which can hinder scalability and complicate deployment in distributed robotic systems.

\section{Problem Formulation}

Our goal is to learn distributed control policies for $N$ drones to collaboratively transport a suspended payload to a target position $p_g$ and orientation $\theta_g$, while avoiding unsafe configurations such as collisions or boundary violations.
We formulate the task as a multi-agent constrained optimal control problem under partial observability. Let the state of drone $i$ at time step $k$ be $x_i^k = (p_i^k, v_i^k)$, where $p_i^k \in \mathbb{R}^2$ denotes the planar position and $v_i^k \in \mathbb{R}^2$ the planar velocity. 
The control input is the commanded horizontal acceleration $u_i^k \in \mathcal{U}$ with $\mathcal{U} := \{u \in \mathbb{R}^2 \mid u_{\min} \le u \le u_{\max}\}.$

The full system state is
\[
\mathbf{x}^k := [x_1^k, \dots, x_N^k, p_p^k, \theta_p^k, v_p^k, \omega_p^k, p_g, \theta_g]\in \mathcal{X},
\]
which includes all drone states as well as the payload's position $p_p^k$, orientation $\theta_p^k$, velocity $v_p^k$, and angular velocity $\omega_p^k$. Denote the joint action by $\mathbf{u}^k := [u_1^k, \dots, u_N^k]$.  The system evolves according to discrete-time dynamics
\[
\mathbf{x}^{k+1} = f(\mathbf{x}^k, \mathbf{u}^k),
\]
where $f$ captures the coupled drone-payload interactions induced by the cable dynamics that will be described in the following section.

Each agent operates under partial observability. Specifically, drone $i$ receives a local observation
\begin{equation}
o_i^k = \left( \{ o_{ij}^k \}_{j \in \mathcal{N}_i},\; o_i^{\text{payload},k},\; o_i^{\text{obs},k},\; p_g, \theta_g\right),
\end{equation} 
where $\mathcal{N}_i = \{j \mid \|p_j^k - p_i^k\| \leq R\}$ denotes the set of nearby agents within sensing radius $R$. The observation components are given by
\begin{equation}
\begin{cases}
o_{ij}^k = \mathcal{O}^a(x_i^k, x_j^k), & \text{(inter-agent features)}, \\
o_i^{\text{payload},k} = \mathcal{O}^p(x_i^k, p_p^k, \theta_p^k), & \text{(payload features)}, \\
o_i^{\text{obs},k} = \mathcal{O}^{\text{obs}}(p_i^k), & \text{(obstacle features)},
\end{cases}
\end{equation}
where $\mathcal{O}^{\text{obs}}$ represents LiDAR-based obstacle observations,
consisting of points on obstacle surfaces detected by each agent within a sensing
radius $R_{Lidar}$.

Safety requirements are encoded as $M$ local constraint functions $\{h_i^{(m)}\}_{m=1}^{M}$ for each agent $i$, where $h_i^{(m)}:\mathcal O \to \mathbb R$ is the $m$-th observation-based constraint.
The corresponding unsafe/avoid set for agent $i$ is
\[
\mathcal{A}_i := \left\{ o_i \in \mathcal{O} \;\middle|\;
\exists m\in\{1,\dots,M\}: \; h_i^{(m)}(o_i) > 0
\right\},
\]
which can represent inter-agent and agent-obstacle collisions, workspace boundary violations, or other task-specific safety limits.

The objective is to learn a distributed policy
$\pi : \mathcal{O} \rightarrow \mathcal{U}$ such that given a cost function $l$ describing a task, the joint policy
$\boldsymbol{\pi}(\mathbf{x}^k) := [\pi(o_1^k), \dots, \pi(o_N^k)]$ minimizes the cumulative task cost while maintaining safety:
\begin{subequations}
\begin{align}
\min_{\pi} & \quad \sum_{k=0}^{\infty}
\ell(\mathbf{x}^k, \boldsymbol{\pi}(\mathbf{x}^k)) \\
\text{s.t.} \quad
& \mathbf{x}^{k+1} = f(\mathbf{x}^k, \boldsymbol{\pi}(\mathbf{x}^k)), \\
& o_i^k = \mathcal{O}_i(\mathbf{x}^k), \quad \forall i, k, \\
& h_i^{(m)}(o_i^k) \leq 0, \quad \forall i, m, k .
\end{align}
\label{eq:problem_formulation}
\end{subequations}

\section{Framework for Scalable and Safe Multi-Drone Transport}
\subsection{2D Abstraction for Multi-Drone Transport}

Modeling cable-suspended multi-drone transportation in full three-dimensional space is challenging due to strongly coupled, hybrid dynamics, including cable slack-taut transitions and unmodeled aerodynamic effects. While high-fidelity models can partially capture these phenomena, they significantly increase computational complexity. Moreover, a recent study demonstrated that modeling fidelity alone does not eliminate the sim-to-real gap in practice~\cite{zeng2025decentralized}. 

In this work, we therefore adopt a minimal 2D abstraction (\Cref{fig:Figure1}a) that preserves the task-relevant coupling between drone configurations and the resulting net force and torque applied to the payload, while abstracting away out-of-plane dynamics that are regulated by a low-level controller. The goal of this abstraction is to retain the decision-relevant interactions necessary for coordination and safety while enabling computationally efficient policy learning.

The abstraction is based on the following assumptions, which hold for tasks in which altitude is regulated and payload motion remains close to planar alignment:

\begin{enumerate}
    \item Drones maintain a regulated altitude, such that vertical motion introduces only bounded disturbances to the planar dynamics.
    \item The tension of each cable is proportional to its 2D projected length, capturing the dominant planar force contribution.
    \item The payload remains approximately planar, with roll and pitch motions negligible compared to the translational and yaw dynamics relevant to the task.
    \item Cable attachment points are fixed and evenly distributed around the payload’s center of mass.
\end{enumerate}

Under these assumptions, the payload is modeled as a planar rigid body governed by
\begin{equation}
M_p \ddot{{p}}_p = \sum_{i=1}^N {T}_i, \qquad
I_p \ddot{\theta}_p = \sum_{i=1}^N {r}_i \times {T}_i,
\end{equation}
where ${p}_p \in \mathbb{R}^2$ denotes the planar position of the payload, $\theta_p$ its yaw angle,  $M_p$ its mass, $I_p$ its moment of inertia about vertical axis, $N$ the number of agents, and ${T}_i \in \mathbb{R}^2$ the planar tension applied at attachment point ${r}_i$, expressed in the payload frame. Under the above assumptions, each cable shares the payload weight through its vertical component, while the planar component of the tension increases with the cable angle relative to the vertical. Specifically, we model the tension vector ${T}_i$ for drone $i$ as a virtual spring pulling toward its designated target location on the payload perimeter:
\begin{equation}
{T}_i = \mathcal{K} \left[ {p}_i - \left( p_p + r \cdot \begin{bmatrix} \cos\left( \frac{2\pi i}{N} \right) \\ \sin\left( \frac{2\pi i}{N} \right) \end{bmatrix} \right) \right],
\end{equation}
where $\mathcal{K}$ is a spring coefficient, ${p}_i$ is the position vector of the $i$th drone, and $r$ is the radial distance from the center of the payload to the cable attachment point. Payload orientation is governed by the planar cable forces, which depend primarily on drone positions rather than drone orientation.

We discretize these dynamics using a double-integrator update with timestep $\Delta t$, which serves as the decision-level model for policy learning and safety certification. Any discrepancy between this abstraction and the continuous-time three-dimensional execution is handled by closed-loop tracking control and explicitly accounted for in the safety analysis in \Cref{app:tracking_dgcbf_proof}.

\subsection{Learning Safe Distributed Policies with DGPPO}
\subsubsection{Discrete Graph Control Barrier Function Proximal Policy Optimization (DGPPO)}
We apply a sim-to-real transfer approach, where the policy is learned in simulation
and then transferred to real-world hardware. DGPPO~\cite{zhang2025discrete} is adopted as a distributed policy optimization backbone
for training multi-drone transport policies under partial observability and safety constraints. Each agent executes a shared-parameter policy $\pi$ using only local graph-based observations, where a gated recurrent unit (GRU) maintains an internal memory to mitigate partial observability.

DGPPO jointly learns a distributed control policy and a constraint value function that serves as a Discrete Graph Control Barrier Function (DGCBF)~\cite{zhang2025gcbf+}. Control Barrier Functions (CBFs) enforce safety by maintaining
control-invariant safe sets~\cite{ames2019control,alan2023control}; however,
classical formulations typically assume centralized access to the full system state.
Such assumptions are impractical in our setting due to complex coupled dynamics and
partial observability. Instead, DGCBFs certify safety directly in the observation
space using only local information, enabling distributed safety assessment and
filtering policy updates that are predicted to violate constraints without requiring
global state access. We refer readers to~\cite{zhang2025discrete} for full algorithmic
details and guarantees, and formalize the continuous-time execution implications in \cref{app:tracking_dgcbf_proof}.

\subsubsection{Domain Randomization}
To promote robustness and generalization, we apply domain randomization during training. Specifically, during each training episode, the number of agents is randomly selected from $\{3, 4, 5\}$. For each case, the agents are symmetrically arranged around the payload with equal radial distance from its center\del{, ensuring balanced tension distribution}.

In addition, we randomize the virtual spring stiffness $\mathcal{K}$, which governs the tension dynamics between each drone and the payload. The stiffness coefficient is uniformly sampled from the range $[0.05, 0.30] \si{\newton\per\meter}$. This range is inspired by the linearized spring approximation of a pendulum, where the effective stiffness can be estimated as $k \approx \frac{mg}{Nl}$ (e.g., for $m \approx 0.027\si{kg}$ , $N = 3$,  $l = 0.5\si{m}$, and $k \approx 0.18 \si{\newton\per\meter}$). By sampling across a broad range around this nominal value, we ensure that the policy remains effective under different tether dynamics and supports smooth sim-to-real transfer.

\section{Continuous-Time Safety under Hierarchical Control}
\label{app:tracking_dgcbf_proof}

Zhang et al.~\cite{zhang2025discrete} have shown that DGCBFs can guarantee safety for multi-agent systems under discrete-time dynamics and decentralized observations. In practice, however, policies learned and certified in discrete time are executed on physical systems that evolve continuously, typically through a hierarchical control architecture in which discrete-time decisions are tracked by a high-bandwidth low-level controller (\Cref{fig:Figure1}b). This section bridges this gap by formalizing conditions under which DGCBF-based safety guarantees extend from discrete-time decision making to continuous-time execution. We begin by recalling the following corollary from prior work~\cite{zhang2025discrete}.
\begin{corollary}[DGCBF]
\label{def:dgcbf}

Define the constrained value function 
$\tilde{V}_{h^{(m)}, \pi} : \mathcal{O} \to \mathbb{R}$,
which depends only on local observations $o_i = O_i(x)$, as
\begin{equation}
\begin{aligned}
\tilde{V}_{h^{(m)}, \pi}(o_i^0)
:= \max_{k \ge 0} h^{(m)}(o_i^k),
\end{aligned}
\end{equation}
Then, the function $\tilde{V}_{h^{(m)}, \pi}$ is DGCBF.

\end{corollary}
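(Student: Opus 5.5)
The plan is to verify directly that $\tilde V_{h^{(m)},\pi}$ meets the defining conditions of a DGCBF as used in~\cite{zhang2025discrete}. Those conditions are:
\begin{itemize}
\item[(i)] $\tilde V \le 0$ implies $h^{(m)} \le 0$, so the sublevel set is contained in the safe set;
\item[(ii)] $\tilde V > 0$ on the avoid set $\mathcal{A}_i$;
\item[(iii)] there is a control (here, $\pi$ itself) under which the discrete-time descent condition
\[
\tilde V(o_i^{k+1}) - \tilde V(o_i^k) \le -\alpha\bigl(\tilde V(o_i^k)\bigr)
\]
holds for some class-$\kappa$ function $\alpha$ with $\alpha(r)\le r$. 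The paper's convention makes $h>0$ unsafe, so in the barrier-sign convention this is equivalently the requirement that $-\tilde V$ not decrease too fast; I would state it in whichever sign~\cite{zhang2025discrete} uses.
\end{itemize}

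Conditions (i) and (ii) are immediate from the definition. Because the maximum over $k \ge 0$ includes $k=0$, we have $\tilde V(o_i^0) \ge h^{(m)}(o_i^0)$. Hence $\tilde V \le 0$ forces $h^{(m)} \le 0$, and $h^{(m)}>0$ forces $\tilde V>0$.

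For (iii), the key step is the dynamic-programming (Bellman) identity for the max-cost value along the closed-loop trajectory generated by $\pi$:
\[
\tilde V(o_i^k) = \max\{h^{(m)}(o_i^k), \tilde V(o_i^{k+1})\}.
\]
This identity gives $\tilde V(o_i^{k+1}) \le \tilde V(o_i^k)$, i.e.\ $\tilde V(o_i^{k+1}) - \tilde V(o_i^k) \le 0$. That is exactly the descent condition in the limiting case $\alpha \equiv 0$. I would therefore invoke the version of the DGCBF definition in~\cite{zhang2025discrete} that allows this case. Alternatively, I would appeal to the result there that the limit $\alpha\to 0$ preserves forward invariance of $\{\tilde V\le 0\}$: if $\tilde V(o^k)\le0$ then $\tilde V(o^{k+1})\le \tilde V(o^k)\le 0$.

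The main obstacle is that $\tilde V$ is written as a function of the local observation $o_i^0$ alone, while the future observations $o_i^k$ depend on the full state and on the other agents' actions. The function is therefore well defined only under the assumption in~\cite{zhang2025discrete} that the trajectory, and hence the value, is determined by the observation under the joint policy. Equivalently, one defines it on the state and assumes it factors through $O_i$. I would state this assumption explicitly, as done in the cited work, and then note that the monotonicity argument above goes through agentwise for each $i$ and $m$. Taking the maximum over $m$ then yields a DGCBF for the combined constraint.
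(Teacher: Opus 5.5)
The paper does not prove this corollary; it recalls it from the cited DGPPO work, so your argument has to stand on its own. Its skeleton is the natural one and is correct. The $k=0$ term gives $\tilde V_{h^{(m)},\pi}\ge h^{(m)}$, which yields (i) and (ii). The recursion $\tilde V(o_i^k)=\max\{h^{(m)}(o_i^k),\tilde V(o_i^{k+1})\}$ gives monotone non-increase along the closed loop. You are also right that this recursion needs the closed-loop future to be determined by $o_i$ under a stationary policy. That is an imported modeling assumption, and a strong one here because the deployed policy carries GRU memory; it is not something you can prove.

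The step that is not closed is (iii). Non-increase is not ``the descent condition with $\alpha\equiv 0$'' in any admissible sense, because $\alpha\equiv 0$ is not class-$\kappa$. Your fallback, forward invariance of $\{\tilde V\le 0\}$ as $\alpha\to 0$, establishes a weaker property than being a DGCBF. What is missing is a sign argument. Take $\alpha$ extended class-$\kappa$; it must be defined on negative arguments anyway for the condition to make sense on the safe set. Whenever $\tilde V(o_i^k)\le 0$ you have $\alpha\bigl(\tilde V(o_i^k)\bigr)\le 0$, so
\[
\tilde V(o_i^{k+1})-\tilde V(o_i^k)\le 0\le -\alpha\bigl(\tilde V(o_i^k)\bigr).
\]
Thus the descent inequality holds for \emph{every} such $\alpha$, not only in a degenerate limit. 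On $\{\tilde V>0\}$, by contrast, the inequality demands a strict decrease, which a max-value function generally does not deliver. Whenever the maximum of $h^{(m)}$ over times $j\ge k$ is attained at some $j\ge k+1$, you get $\tilde V(o_i^{k+1})=\tilde V(o_i^k)>0$. So read the corollary with the DGCBF condition imposed on the zero-sublevel set, which is the only place the forward-invariance argument evaluates it, and verify it there with the sign argument above. Covering all states through $\alpha\equiv 0$ is neither needed nor admissible.
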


\begin{assumption}[Tracking error bound]
\label{ass:tracking_error}
At discrete decision steps $k$, the policy outputs desired accelerations $u_i^k$, which are integrated to generate constant position references $p_{i,d}(t)$ over each decision interval $t\in[k\Delta t,(k+1)\Delta t)$. A low-level feedback controller operating at a higher update rate tracks these references, yielding a continuous-time closed-loop trajectory. There exists a known constant $\varepsilon_{\mathrm{trk}} > 0$ such that, for all agents $i$ and all $t\ge 0$,
\begin{equation}
\|p_i(t) - p_{i,d}(t)\| \le \varepsilon_{\mathrm{trk}} .
\label{eq:tracking_error}
\end{equation}
\end{assumption}

\begin{assumption}[Lipschitz safety constraints]
\label{ass:lipschitz_constraints}
Each local safety constraint function $h_i^{(m)}$ is Lipschitz
in $o_i$ with constant $L_{i,m}\ge 0$, i.e.,
\begin{equation}
|h_i^{(m)}(o_i)-h_i^{(m)}(\bar{o}_i)|
\le L_{i,m}\|o_i-\bar{o}_i\|,
\quad \forall o_i,\bar{o}_i\in\mathcal{O}.
\label{eq:lipschitz_h}
\end{equation}
\end{assumption}

\begin{assumption}[Observation sensitivity to tracking error]
\label{ass:obs_lipschitz}
There exists a constant $L_{O,i}\ge 0$ such that for all $t\ge 0$,
\begin{equation}
\|o_i(t)-o_{i,d}(t)\| \le L_{O,i}\,\|p_i(t)-p_{i,d}(t)\|.
\label{eq:obs_lipschitz}
\end{equation}
\end{assumption}

\begin{definition}[Tracking-error--tightened constraints]
\label{def:tightened_constraints}
Define the tightened constraint functions
\begin{equation}
h_{i,\mathrm{tight}}^{(m)}(o_i)
:= h_i^{(m)}(o_i) + L_{i,m}\,L_{O,i}\,\varepsilon_{\mathrm{trk}} .
\label{eq:tightened_constraint}
\end{equation}
\end{definition}
We now show that, under bounded tracking error and mild regularity assumptions,
DGCBF-certified policies ensure continuous-time safety.
\begin{theorem}[Continuous-time safety under hierarchical tracking]
\label{thm:ct_safety_from_dgcbf}
Suppose Assumptions~\ref{ass:tracking_error}, \ref{ass:lipschitz_constraints},
and \ref{ass:obs_lipschitz} hold. Let $t_k := k\Delta t$ and let $o_{i,d}^k := o_{i,d}(t)$ for
$t\in[t_k,t_{k+1})$, where $o_{i,d}(t)$ is formed by replacing $p_i(t)$ with
$p_{i,d}(t)$ in $o_i(t)$.
Assume the distributed policy $\pi$ renders the tightened constraints
\eqref{eq:tightened_constraint} forward invariant at decision steps on the
reference observations, i.e., for all $k$,
\begin{equation}
h_{i,\mathrm{tight}}^{(m)}(o_{i,d}^k)\le 0
~\Rightarrow~
h_{i,\mathrm{tight}}^{(m)}(o_{i,d}^{k+1})\le 0 ,
\quad \forall i,m.
\label{eq:disc_invariance_tight_ref}
\end{equation}
Then, if $h_{i,\mathrm{tight}}^{(m)}(o_{i,d}^0)\le 0$ for all $i,m$, the original
continuous-time constraints hold for all time:
\begin{equation}
h_i^{(m)}(o_i(t)) \le 0,
\quad \forall t\ge 0,~\forall i,~\forall m.
\label{eq:ct_safety_result}
\end{equation}
\end{theorem}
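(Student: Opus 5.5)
The proof has two parts. First, an induction over decision steps shows that the tightened constraints hold at every step on the reference observations. Second, a pointwise Lipschitz argument transfers this from the reference observation $o_{i,d}(t)$ to the true observation $o_i(t)$ at every continuous time $t$, with the tightening margin absorbing the tracking error.

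\emph{Discrete-step invariance.} The base case is the hypothesis $h_{i,\mathrm{tight}}^{(m)}(o_{i,d}^0)\le 0$ for all $i,m$. The inductive step is exactly \eqref{eq:disc_invariance_tight_ref}. Hence $h_{i,\mathrm{tight}}^{(m)}(o_{i,d}^k)\le 0$ for all $k\ge 0$ and all $i,m$. Because $o_{i,d}^k$ is defined as $o_{i,d}(t)$ for every $t\in[t_k,t_{k+1})$, and these intervals partition $[0,\infty)$, we conclude $h_{i,\mathrm{tight}}^{(m)}(o_{i,d}(t))\le 0$ for all $t\ge 0$. This is where the piecewise-constant reference of Assumption~\ref{ass:tracking_error} is used: the reference observation does not change between decision instants, so the discrete certificate covers the whole interval.

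\emph{From reference to true observation.} Fix $t\ge 0$, $i$ and $m$. Apply Assumption~\ref{ass:lipschitz_constraints}, then Assumption~\ref{ass:obs_lipschitz}, then \eqref{eq:tracking_error}:
\begin{equation*}
h_i^{(m)}(o_i(t)) \le h_i^{(m)}(o_{i,d}(t)) + L_{i,m}\|o_i(t)-o_{i,d}(t)\| \le h_i^{(m)}(o_{i,d}(t)) + L_{i,m}L_{O,i}\varepsilon_{\mathrm{trk}}.
\end{equation*}
The right-hand side equals $h_{i,\mathrm{tight}}^{(m)}(o_{i,d}(t))$ by Definition~\ref{def:tightened_constraints}, and this is $\le 0$ by the first step. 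This gives \eqref{eq:ct_safety_result}.

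\emph{Main subtlety.} The chaining itself is routine; the delicate point is interpreting Assumption~\ref{ass:obs_lipschitz} correctly. The observation $o_i$ contains neighbour-relative features, and replacing only $p_i$ by $p_{i,d}$ leaves the true neighbour positions inside $o_{i,d}$. The sensitivity constant $L_{O,i}$ must therefore already account for how $o_i$ depends on agent $i$'s own position error. If neighbours' errors also matter, I would read $o_{i,d}$ as built from all agents' references and enlarge $L_{O,i}$ accordingly. This is still justified by the uniform bound of Assumption~\ref{ass:tracking_error}, which holds for all agents, and the inequality chain is unchanged. I would state this reading explicitly and otherwise treat the assumption as given, so the argument reduces to the two inequalities above.
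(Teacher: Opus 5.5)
Your proposal is correct and follows the paper's proof essentially step for step. You induct on the discrete-step invariance hypothesis to get the tightened constraint at every decision step, use the piecewise-constant reference to extend it to all $t\in[t_k,t_{k+1})$, and then chain the Lipschitz bound with Assumptions~\ref{ass:obs_lipschitz} and~\ref{ass:tracking_error} so the margin $L_{i,m}L_{O,i}\varepsilon_{\mathrm{trk}}$ exactly cancels.
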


\begin{proof}
By \eqref{eq:disc_invariance_tight_ref} and the initial condition
$h_{i,\mathrm{tight}}^{(m)}(o_{i,d}^0)\le 0$, we have
\begin{equation}
h_{i,\mathrm{tight}}^{(m)}(o_{i,d}^k)\le 0,
\quad \forall k\in\mathbb{N},~\forall i,m.
\label{eq:tight_ref_allk}
\end{equation}
Fix any $t\ge 0$, and let $k$ be such that $t\in[t_k,t_{k+1})$.
Under Assumption~\ref{ass:tracking_error}, the reference is constant on each
decision interval, i.e., $p_{i,d}(t)=p_{i,d}^k$ for $t\in[t_k,t_{k+1})$, and thus
$o_{i,d}(t)=o_{i,d}^k$ on the same interval. Therefore,
\begin{equation}
h_{i,\mathrm{tight}}^{(m)}(o_{i,d}(t))
= h_{i,\mathrm{tight}}^{(m)}(o_{i,d}^k)\le 0.
\label{eq:tight_ref_allt}
\end{equation}
It follows from Definition~\ref{def:tightened_constraints} and
\eqref{eq:tight_ref_allt} that
\begin{equation}
h_i^{(m)}(o_{i,d}(t)) \le -L_{i,m}\,L_{O,i}\,\varepsilon_{\mathrm{trk}}.
\label{eq:h_ref_margin}
\end{equation}
By Assumption~\ref{ass:lipschitz_constraints},
\begin{equation}
h_i^{(m)}(o_i(t))
\le h_i^{(m)}(o_{i,d}(t)) + L_{i,m}\|o_i(t)-o_{i,d}(t)\|.
\label{eq:lipschitz_step}
\end{equation}
Using Assumption~\ref{ass:obs_lipschitz} and Assumption~\ref{ass:tracking_error},
\[
\|o_i(t)-o_{i,d}(t)\|
\le L_{O,i}\|p_i(t)-p_{i,d}(t)\|
\le L_{O,i}\varepsilon_{\mathrm{trk}}.
\]
Substituting this bound and \eqref{eq:h_ref_margin} into \eqref{eq:lipschitz_step}
gives
\[
h_i^{(m)}(o_i(t))
\le \big(-L_{i,m}L_{O,i}\varepsilon_{\mathrm{trk}}\big)
+ L_{i,m}L_{O,i}\varepsilon_{\mathrm{trk}}
=0.
\]
Therefore $h_i^{(m)}(o_i(t))\le 0$ for all $t\ge 0$, proving
\eqref{eq:ct_safety_result}.
\end{proof}

\section{Experiments}

\begin{figure}[!t]
    \centering
    \includegraphics[width=0.90\columnwidth]{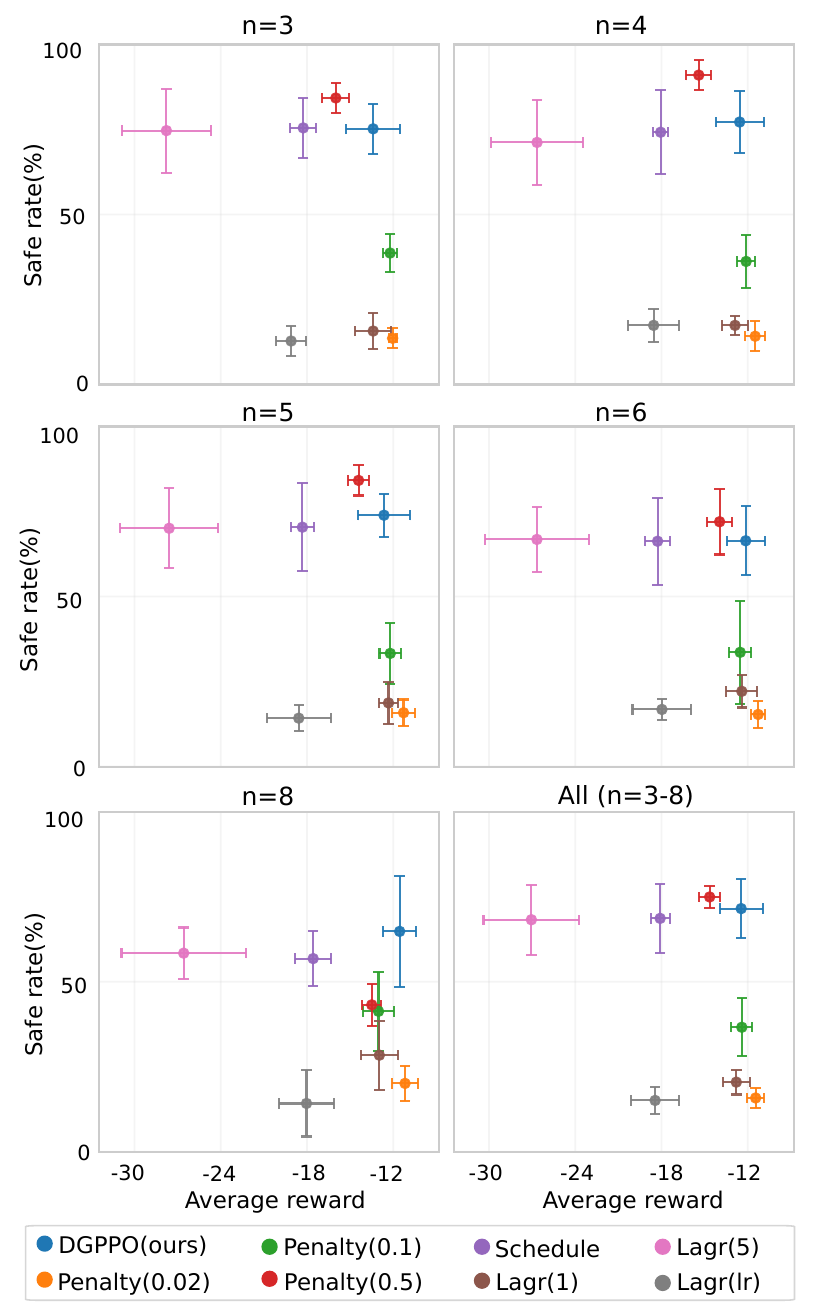}
    \vspace{-0.2cm}
    \caption{\textbf{Simulation results across different team sizes.}
    For each number of agents ($n \in \{3,4,5,6,8\}$), we evaluate policies on $25$ randomly generated scenarios.
    Each marker reports the mean reward and safety rate across five evaluation seeds (125 scenarios total), with error bars showing variability across seeds. We compare several distributed policy optimization backbones under the same setup.}
    \vspace{-0.4cm}
    \label{fig:sim_results}
\end{figure}

\subsection{Simulation Experiments}
We evaluate our approach in a simulation environment based on the proposed
2D abstraction of the multi-agent cooperative transport task.
In each episode, $N \in \{3,4,5\}$ agents (radius $r = 0.09\,\si{m}$, mass $0.027\,\si{kg}$) collaboratively translate and rotate a rigid $N$-gon payload (mass $0.045\,\si{kg}$) toward a target pose, while circular obstacles are randomly sampled in the environment. The payload mass is fixed, while cable stiffness is randomized to capture variations in the dynamics.
System dynamics are integrated using a JAX-based physics engine with timestep $\Delta t = 0.1\,\si{s}$ and five substeps per decision step.

Each agent is equipped with a $32$-beam planar LiDAR sensor with a maximum range
of $0.5\,\si{m}$ and communicates with neighboring agents within a radius of
$0.25\,\si{m}$. Observations include the agent’s own kinematic state, the payload
pose and velocity, the relative goal, and the shortest-$8$ LiDAR returns. To model
sensing uncertainty, we inject uniform noise of $\pm 5\%$ on positions and
orientations and $\pm 30\%$ on velocities.

Policies were trained for $10^5$ steps using $128$ parallel environments in a vectorized JAX-based simulator. Training required approximately $12$ hours on a single workstation equipped with an Intel Core Ultra~9~285K CPU, $62\,\si{GiB}$ of RAM, and an NVIDIA GeForce RTX~5090 GPU.

\begin{figure*}[!t]
    \centering

    \begin{overpic}[width=0.95\textwidth]{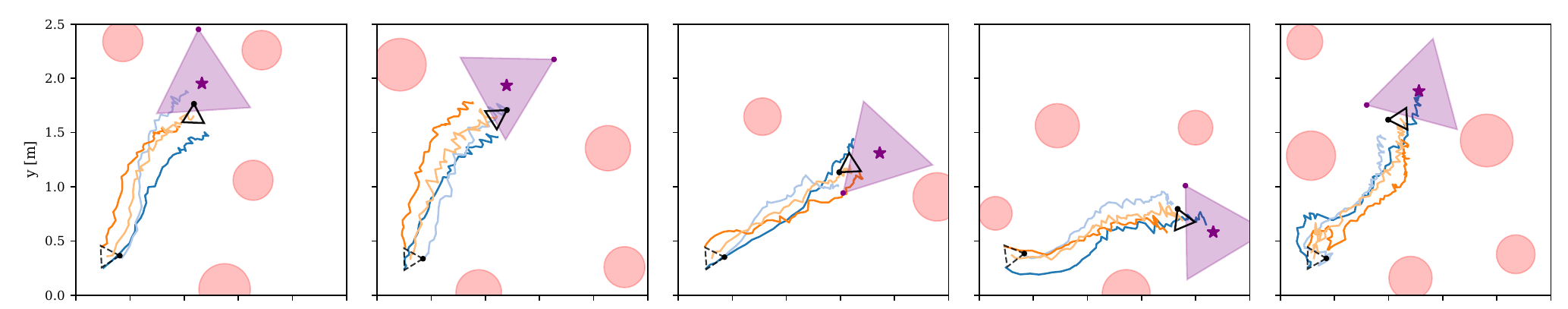}
        \put(-0.5,18){\normalsize\textbf{(a)}}
    \end{overpic}
    \vspace{-0.2cm}
    \begin{overpic}[width=0.95\textwidth]{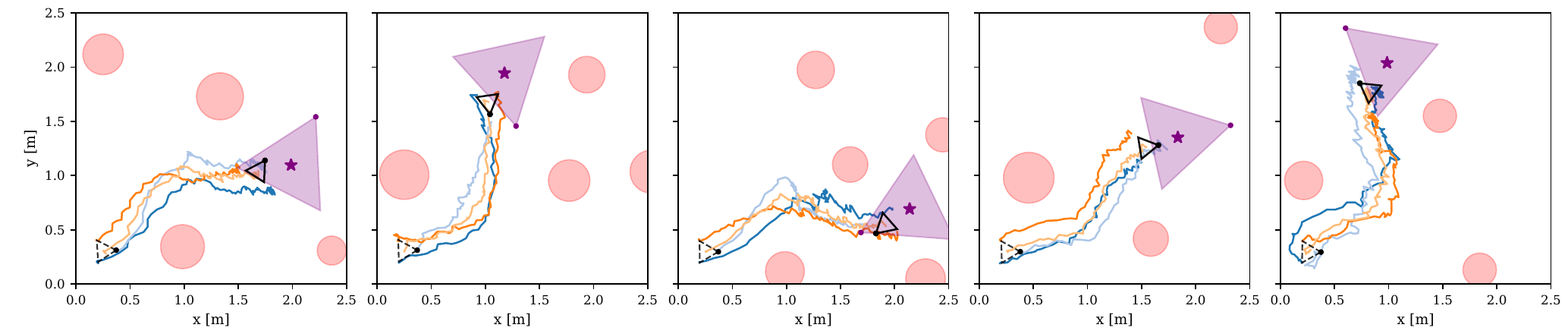}
        \put(-0.5,20){\normalsize\textbf{(b)}}
    \end{overpic}
    \vspace{-0.2cm}
    \begin{overpic}[width=0.95\textwidth]{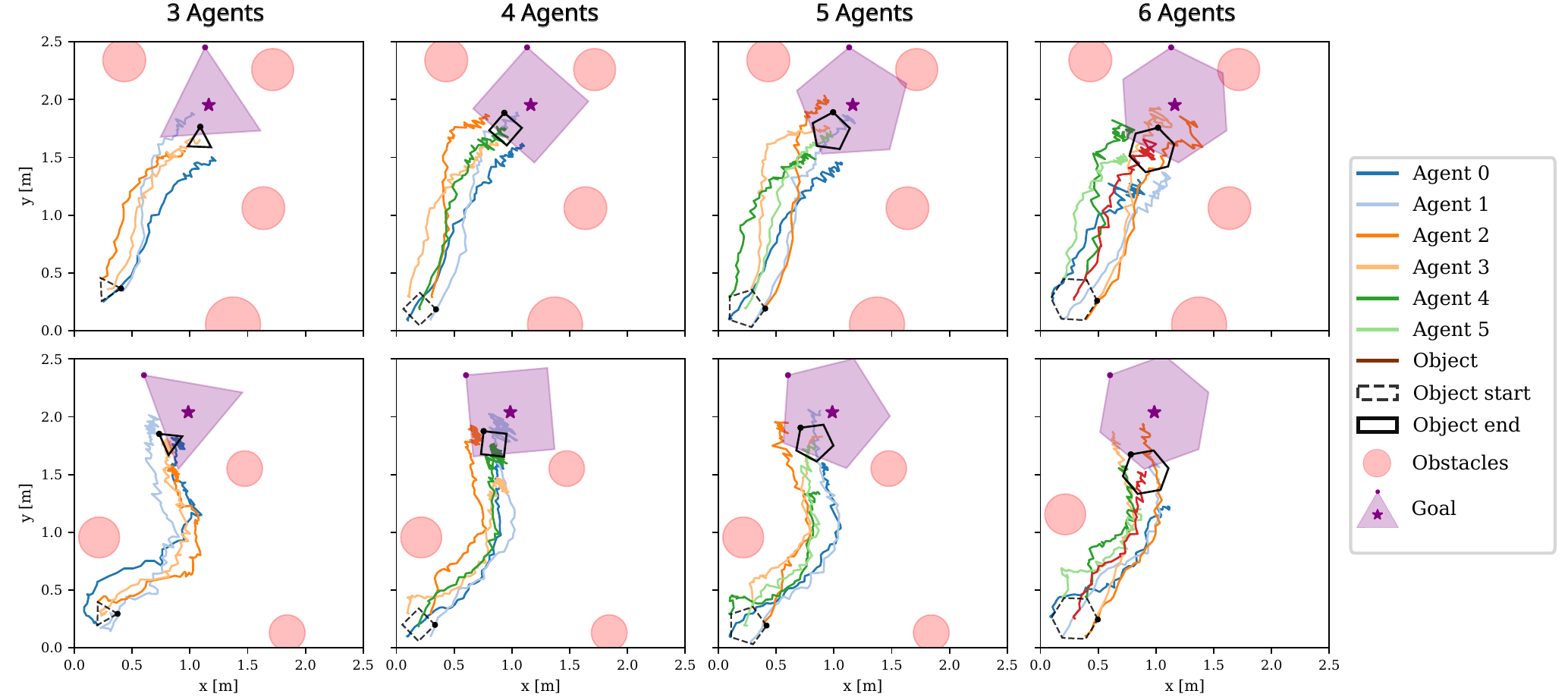}
        \put(-0.5,43){\normalsize\textbf{(c)}}
    \end{overpic}
    \caption{\textbf{Hardware validation of distributed multi-drone transportation.}
    (\textbf{a}) Easy scenarios where obstacles do not significantly affect the payload trajectory.
    (\textbf{b}) Hard scenarios where obstacles actively constrain the transport trajectories.
    (\textbf{c}) Scalability and generalization of a single learned policy evaluated with $3$–$6$ agents.}
    \vspace{-0.4cm}
    \label{fig:hardware_results}
\end{figure*}

\subsubsection{Reward and Safety Constraints Design}

\textbf{Reward.}
The reward function encourages the payload to reach the target pose while
promoting smooth and low-effort control actions. At each timestep $t$, the reward
is defined as
\begin{equation}
\begin{aligned}
r_t
= {} &
- w_d \, \| p_p - p_g \|_2^2
- w_\theta \, \left| \theta_p - \theta_g \right| \\
& - w_{\mathrm{thr}} \, \mathds{1}\!\left[ \| p_p - p_g \|_2 > d_{\mathrm{thr}} \right] \\
& - w_u \, \frac{1}{N} \sum_{i=1}^{N} \| u_i \|_2^2
- w_s \, \frac{1}{N} \sum_{i=1}^{N} \| u_i - u_i^{\mathrm{prev}} \|_2^2 ,
\end{aligned}
\label{eq:reward}
\end{equation}
where $p_p$ and $p_g$ represent the payload and goal positions, $\theta_p$ and
$\theta_g$ their corresponding orientations, $u_i$ the planar control action of
agent~$i$, and $u_i^{\mathrm{prev}}$ the control action at the previous time step. The weights are set to $w_d = 0.06$, $w_\theta = 0.06$, $w_{\mathrm{thr}} = 0.001$, $w_u = 0.1$, and $w_s = 0.1$. We optimize policies by maximizing the cumulative reward,
which is equivalent to minimizing a task cost defined as the negative reward,
consistent with the formulation in~\eqref{eq:problem_formulation}.

\textbf{Safety Constraints.}
In addition to the reward, we compute per-agent constraints that quantify
safety. These constraints are used for reporting and
for learning distributed safety certificates. The per-agent constraint function for agent $i$ is defined as
\begin{equation}
h_i =
\begin{bmatrix}
4\bigl(2r - \min_{j \neq i} \|p_i - p_j\|_2\bigr) \\
2\bigl(r - \min_k \|p_i - o_k\|_2\bigr) \\
2\,d_{\mathrm{po}} \\
10\bigl(\|p_i - p_{v,i}\|_2 - d_{\mathrm{tol}}\bigr)
\end{bmatrix},
\end{equation}
where $p_i$ denotes the position of agent $i$, $o_k$ denotes the position of the
$k$-th obstacle, and $p_{v,i}$ is the position of the attachment vertex assigned to agent~$i$. The scalar $d_{\mathrm{po}}$ denotes the payload--obstacle distance, and
$d_{\mathrm{tol}}$ specifies the allowable deviation between an agent and its
assigned attachment vertex. Each constraint component is margin-shifted by $\varepsilon = 0.5$ and clipped to the range $[-1,1]$.

\subsubsection{Evaluation Metrics}
We evaluate policy performance using two metrics: average reward and safety rate. For evaluation, we generate multiple randomized scenarios in
which the initial and goal payload positions and orientations, as well as obstacle
sizes and locations, are randomly sampled. For each team size
$n \in \{3,4,5,6,8\}$, we evaluate the policy on $25$ distinct scenarios. The average reward is computed over the full episode, while the safety rate is defined as the fraction of agents that satisfy all safety constraints throughout the execution.
\subsubsection{Policy Optimization Backbones}
We compare representative policy optimization backbones for our
framework and adopt DGPPO based on the comparison. Specifically, we considered the MARL algorithm InforMARL~\cite{nayak2023scalable} and the safe MARL algorithm MAPPO-Lagrangian~\cite{gu2021multi,gu2023safe}, which were trained and tested under the same environmental settings. \rev{For InforMARL}\del{For penalty-based methods}, constraint violations are aggregated as
$\textit{Penalty}(\beta)
= \beta \sum_{i=1}^{N} \sum_{m=1}^{M} \max(0, h_i^{(m)})$
and added to the cost function. We additionally evaluated a weight-scheduling scheme (\textit{Schedule}), where $\beta$ was initialized to $0.01$ and increased at $50\%$ and $75\%$ of the total training steps. For MAPPO-Lagrangian, we employed a GNN-based policy backbone to ensure architectural comparability across backbones. We evaluated two different initializations of the Lagrange multiplier with a learning rate of $10^{-7}$, denoted as \textit{Lagr}($\lambda_0$), and additionally increased the learning rate to $10^{-4}$ (\textit{Lagr(lr)}) with $\lambda_0 = 1$. All methods were trained for the same number of update steps, selected to be sufficiently large to ensure convergence across all approaches.

\subsubsection{Backbone Comparison}
\Cref{fig:sim_results} compares the backbones across team sizes. \rev{DGPPO achieves a favorable reward--safety trade-off: it maintains high safety rates while preserving competitive task reward compared with the Lagrangian and penalty-based variants. This trend remains stable as the number of agents increases from the training range of $3$--$5$ to larger teams such as $6$ and $8$, indicating scalability beyond the training team sizes.} While DGPPO and the $\beta=0.5$ penalty variant are similar in aggregate, under a stricter $0.1\,\si{m}$ goal threshold DGPPO outperforms at every team size, with an average hit rate of $84\%$ vs.\ $59\%$ for the penalty variant. This justifies DGPPO as the policy optimization backbone, while the proposed framework remains modular and could incorporate other safe RL methods.

\subsubsection{Sensing and Communication Robustness}
We evaluate the test-time robustness of the trained DGPPO policy on
teams of $n=5$ agents under per-agent message dropout and additional
Gaussian LiDAR noise, with 100 episodes per condition. The nominal
safety rate is $76.2\%$, which decreases to $72.4\%$ under $2\%$
dropout and $71.4\%$ under $\sigma = 0.03\,\si{m}$ LiDAR noise, and
further drops to $61.4\%$ under $5\%$ dropout and $52.6\%$ under
$\sigma = 0.045\,\si{m}$ LiDAR noise.

\begin{figure*}[!t]
    \centering
    \includegraphics[width=0.90\textwidth]{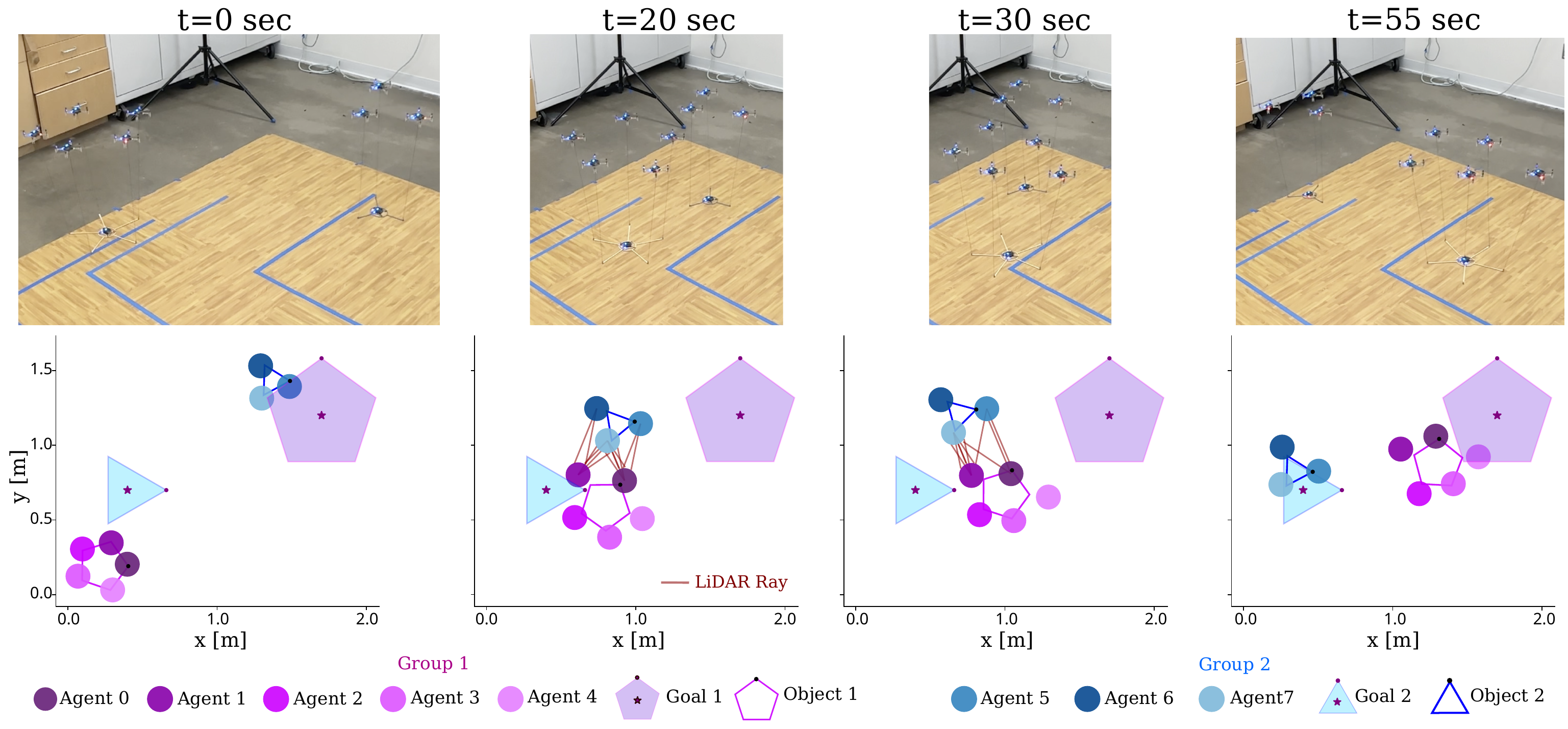}
    \vspace{-0.2cm}
    \caption{\textbf{Multi-group multi-drone transportation experiment.}
    Time-lapse snapshots and recorded trajectories show two independent teams transporting separate payloads while perceiving the other team as dynamic obstacles via LiDAR.}
    \vspace{-0.4cm}
    \label{fig:multigroup}
\end{figure*}
\subsection{Hardware Experiments}
We deploy the learned policies on a real-world Crazyflie~2.1 quadrotor platform equipped with the thrust-upgrade bundle to provide sufficient thrust margin for stable hovering and payload transport.
In all experiments, the drones are uniformly distributed around the payload with an inter-agent spacing of $0.20\,\si{m}$. The payloads are either 3D-printed or laser-cut rigid polygons.
To localize the payload, an additional Crazyflie unit without motors is mounted at the payload’s center of mass.
All experiments use cables of length $0.50\,\si{m}$, and the payload mass is approximately $27\,\si{g}$.

During hardware experiments, a centralized computer is used for state
estimation and logging, while decentralized execution is emulated by providing local observation to each agent's policy. The policy outputs planar accelerations in the $x$ and $y$ directions at a rate of $10\,\mathrm{Hz}$. These accelerations are integrated to generate the next desired planar waypoint. A low-level onboard PID controller then tracks this waypoint. The commanded accelerations are scaled by a factor of $15$. This scaling was selected empirically and was found to yield stable closed-loop behavior under the hierarchical control architecture. The desired altitude ($z$-axis) is held constant throughout the experiment. Across all hardware trials, the mean position tracking error remained below 5.8\,\si{cm}, with most experiments exhibiting errors between 3–5\,\si{cm}.

To validate policy performance, we design $10$ hardware test scenarios, comprising $5$ \emph{Easy} scenarios (Fig.~\ref{fig:hardware_results}a) and $5$ \emph{Hard} scenarios (Fig.~\ref{fig:hardware_results}b).
In each scenario, the initial positions of the drones and payload are fixed, while the goal pose and orientation are randomly sampled such that the distance between the initial and goal payload positions is $2.0\,\mathrm{m}$. Circular obstacles are randomly placed within the workspace.
In the Easy scenarios, no obstacles lie directly between the initial and goal positions, such that obstacle avoidance does not significantly alter the nominal transport trajectory. In contrast, the Hard scenarios contain obstacles that obstruct the direct path, requiring nontrivial maneuvering. In all tested scenarios, teams of three drones successfully transport the payload to the desired goal position and orientation. Task completion is declared when the payload is within $0.30\,\si{m}$ of the goal position and the orientation error is less than $0.2\,\si{rad}$.

We further evaluate the scalability and generalization of the learned policy to varying team sizes (\Cref{fig:hardware_results}c). Specifically, we test the first Easy scenario and the final Hard scenario using teams of $4$, $5$, and $6$ drones. Notably, although domain randomization during training was restricted to team sizes in the range $3$--$5$, the learned policy generalizes reliably to the unseen $6$-drone configuration. In the $6$-drone Hard scenario, the obstacle density is intentionally reduced to accommodate the larger formation footprint. \rev{This modified trial tests zero-shot detouring with six agents and is not directly comparable to the smaller-team Hard results; the controlled cross-team comparison is provided in simulation (\Cref{fig:sim_results}).} All scenarios are successfully completed, except for one $6$-drone Hard scenario, where the payload reaches the target region but plateaus near the goal, resulting in a final position error of $0.5\,\si{m}$ and an orientation error of $0.2\,\si{rad}$. This suggests a terminal pose-refinement limitation from reward shaping, not a safety failure.

\subsection{Multi-Group Multi-Drone Transportation}

A key advantage of our approach is that it does not assume static obstacles. Since obstacles are represented implicitly through LiDAR observations, the learned policy can naturally respond to dynamic obstacles without explicit modeling.

We evaluate this capability in a hardware experiment (\Cref{fig:Figure1}.c). Without any additional fine-tuning, we conduct multi-group tests involving two teams with different sizes, specifically a $5$-agent group and a $3$-agent group. As shown in fig.~\ref{fig:multigroup}, agents treat drones from the other group as dynamic obstacles using LiDAR sensing and successfully avoid inter-group collisions during execution (e.g., at $t=20\si{s}$ and $t=30\si{s}$). Both groups reliably transport their respective payloads to the target positions and orientations without interference, demonstrating the robustness, scalability, and generality of the proposed distributed policy. For this experiment, the communication radius is increased to $0.30\,\si{m}$ to ensure reliable inter-agent coordination.

\section{Conclusion}
We demonstrated scalable and safety-aware cooperative multi-drone payload transportation through extensive real-world experiments, including teams larger than those encountered during training and dynamic multi-group scenarios in which other drone teams act as moving obstacles. These results establish the practical feasibility of fully distributed, learning-based aerial collaboration in complex physical environments.

The successful zero-shot transfer from simulation to hardware provides empirical support for modeling cable-suspended transport using a minimal planar decision abstraction, and suggests that learned safety certificates with appropriate margins can be preserved during real-world execution.

This work is currently limited to planar payload transport with evenly
distributed cable attachment points. Future work will investigate broader attachment configurations, cable properties, and payload variations, as well as extend the framework to fully three-dimensional transportation.

\bibliographystyle{IEEEtran}
\bibliography{references.bib}
\end{document}